\documentclass[11pt]{article}
\usepackage[T1]{fontenc}
\usepackage[margin=1in]{geometry}
\usepackage{microtype}
\usepackage{times}
\usepackage{amsmath,amssymb,amsthm}
\usepackage{graphicx}
\usepackage{longtable,booktabs}
\newtheoremstyle{giplain}{6pt}{6pt}{\itshape}{0pt}{\bfseries}{.}{ }{}
\newtheoremstyle{gidef}{6pt}{6pt}{\normalfont}{0pt}{\bfseries}{.}{ }{}
\theoremstyle{giplain}\newtheorem{theorem}{Theorem}
\theoremstyle{gidef}\newtheorem{definition}{Definition}\newtheorem{assumption}{Assumption}
\usepackage[numbers,sort&compress]{natbib}
\usepackage[hidelinks]{hyperref}

\usepackage{xcolor}

\title{\bfseries Governed Individuation: Cryptographically Decoupling an Agent's Learning from Its Authority}
\author{%
  Xue Qin$^{1}$, Simin Luan$^{2}$, Cong Yang$^{3,*}$, Zhijun Li$^{2,*}$\\[4pt]
  \small $^{1}$~School of Software, Harbin Institute of Technology, Harbin, China\\
  \small $^{2}$~School of Computer Science and Technology, Harbin Institute of Technology, Harbin, China\\
  \small $^{3}$~School of Future Science and Engineering, Soochow University, Suzhou, China\\
  \small $^{*}$~Corresponding authors: cong.yang@suda.edu.cn; lizhijun\_os@hit.edu.cn%
}
\date{}

\begin{document}
\maketitle

\begin{abstract}
\noindent
Autonomous agents are moving from sandboxed text generators to operators of code, data, and physical infrastructure, and they increasingly learn while deployed. This reopens a question that alignment techniques answer only probabilistically: after an agent has adapted in the field, is the running system still confined to what its operator authorised? Here we show that confinement can be guaranteed as an invariant of the agent's execution architecture rather than a probabilistic outcome of its training. \emph{Governed individuation} binds an agent at boot to a cryptographically frozen identity digest, and routes every action through a gate defined over the \emph{semantic effect} of the action rather than its name. We prove that no amount of learning, skill acquisition, or self-induced governance abstraction can widen the agent's permitted authority without an operator-signed change to its identity; the guarantee holds even when the agent induces its own safety principle and that principle is wrong. Empirically, in an open-ended tool-use benchmark where a large action space rules out name-based blocking, ungoverned software agents under reward pressure attempt to tamper with their own evaluation at a task-dependent rate that reaches every run on the hardest task, whereas the gate reduces executed forbidden effects to zero as a verified property of the construction, preserving task success where the task is within the model's competence. Refusal history lowers forbidden proposals on held-out red-line families whose names the agent has never seen, a transfer our controls attribute to broad avoidance of the refused effects rather than to a uniquely induced rule. Trust in a deployed learning agent shifts from a wager on its continued alignment to a check anyone can run at boot.
\end{abstract}

\section{Introduction}

An agent that only generates text can be governed by moderating its outputs. An agent that writes and runs code, moves money, or drives a robot is governed only if the \emph{effects} of its actions are constrained, and this becomes harder precisely as the agent becomes more useful, because a capable agent under reward pressure discovers that the fastest way to satisfy an objective is often to bypass the check that measures it. Editing the test rather than fixing the code, reading the answer key rather than computing the answer, and quietly widening its own permissions are all instrumentally rational and all outside what an operator would authorise~\citep{amodei2016concrete,krakovna2020specification}. Today the field defends against this with training-time alignment and prompted policies~\citep{ouyang2022instructgpt,bai2022constitutional}; both are probabilistic, both were designed for a fixed model rather than for agents that keep adapting after deployment~\citep{shinn2023reflexion,park2023generative}, and a policy that reads untrusted inputs can be steered by instructions planted in them~\citep{greshake2023injection,deng2025threats,robey2025jailbreaking}. The question an operator actually needs answered is not ``is the agent likely aligned'' but ``can the agent, having learned in the field, do something it was never authorised to do,'' and that question should have a checkable answer.

The gap is not unrecognised. Agent identifiers and activity logs give deployed agents names and audit trails~\citep{chan2024visibility,shavit2023governing,chan2025infrastructure}, and authenticated delegation scopes the credentials an agent may exercise~\citep{south2025delegation}. Runtime privilege systems for tool-using agents make narrowing automatic and route widening through approval~\citep{shi2025progent,zhang2026libos}; guardrails and shields hold behaviour inside an envelope fixed outside the agent~\citep{ravichandran2026guardrails,sha2001simplex,alshiekh2018shielding,ames2019cbf}; system-layer designs isolate control and data flow so that untrusted inputs cannot redirect execution~\citep{debenedetti2025camel}; and recent diagnoses conclude that a stable, accountable identity is exactly what learning agents still lack~\citep{tallam2026mutability,hu2026dissociative,otsuka2026standards}. What none of these mechanisms supplies is the conjunction. Identity schemes bind the agent to a measurement of a mutable artefact, a weight or binary hash~\citep{lin2025baid}; an identity defined that way cannot survive the first field update. Attestation schemes track the current posture and re-attest as it changes~\citep{jiang2025seat}; the evidence stays fresh, but no persistent principal accumulates a history. Enforcement schemes constrain behaviour without defining whose behaviour is being constrained, or under which committed authority. The architecture below supplies the conjunction and proves what it buys.

Governed individuation makes this an architectural, not a behavioural, guarantee: an agent's competence may grow without bound while the boundary of what it is \emph{permitted} to do stays fixed, and the fixing is witnessed by a cryptographic digest that anyone can recompute. The move has three parts. First, the agent's identity is a frozen commitment (its mandate, its red-lines, and the authority ceiling the operator certified), hashed to a digest $H$ recomputed at every boot; memory, weights, skills, and tool versions are deliberately outside the commitment and may change endlessly while $H$ does not; continual adaptation and open-ended skill acquisition are the point of deploying a learning agent, not a defect~\citep{parisi2019continual,wang2023voyager}. Second, the gate that admits or refuses an action decides on the action's semantic \emph{effect} (does this write the evaluation, read a secret, widen a policy) rather than on its name, so an action relabelled to look benign is still caught by what it does, and a composite of individually-permitted skills whose combined effect exceeds the ceiling is refused even though a name-based filter would pass it. Third, the only channel that can relax the boundary is an operator-signed update, which necessarily changes $H$; autonomous tightening needs no signature, autonomous widening is impossible. These moves adapt four long-standing security mechanisms to a system that \emph{learns}: the reference monitor~\citep{anderson1972reference,saltzer1975protection,lampson1974protection}, capability-based least privilege~\citep{miller2006robust}, information-flow control's discipline of judging an action by what it does rather than what it is called~\citep{sabelfeld2003language}, and supply-chain attestation with compromise-surviving updates~\citep{torresarias2019intoto,samuel2010tuf}; the new content is not any one mechanism but a guarantee that holds while the agent rewrites its own behaviour and even its own governance heuristics.

\noindent\textbf{Contributions.} We make four contributions. (i) We formalise \emph{governed individuation}: an execution architecture that binds a learning agent to a frozen, cryptographically committed identity and admits actions by their semantic effect against that commitment. (ii) We prove a conservation theorem: no amount of learning, skill acquisition, or self-induced governance principle can widen the agent's permitted authority without an operator-signed change to its identity, and the bound is independent of how wrong the agent's learned principle is. (iii) We evaluate the same governance engine in a governance-decision benchmark across three open-weight model families and in an open-ended tool-use benchmark on two of them, and adversarially attack the effect monitor, measuring the residual verifier error the theorem reduces safety to. (iv) We release the code, the data, and a reproduction driver.

\section{Governed individuation: the model and its guarantee}

\begin{figure}[t]
\centering
\includegraphics[width=\linewidth]{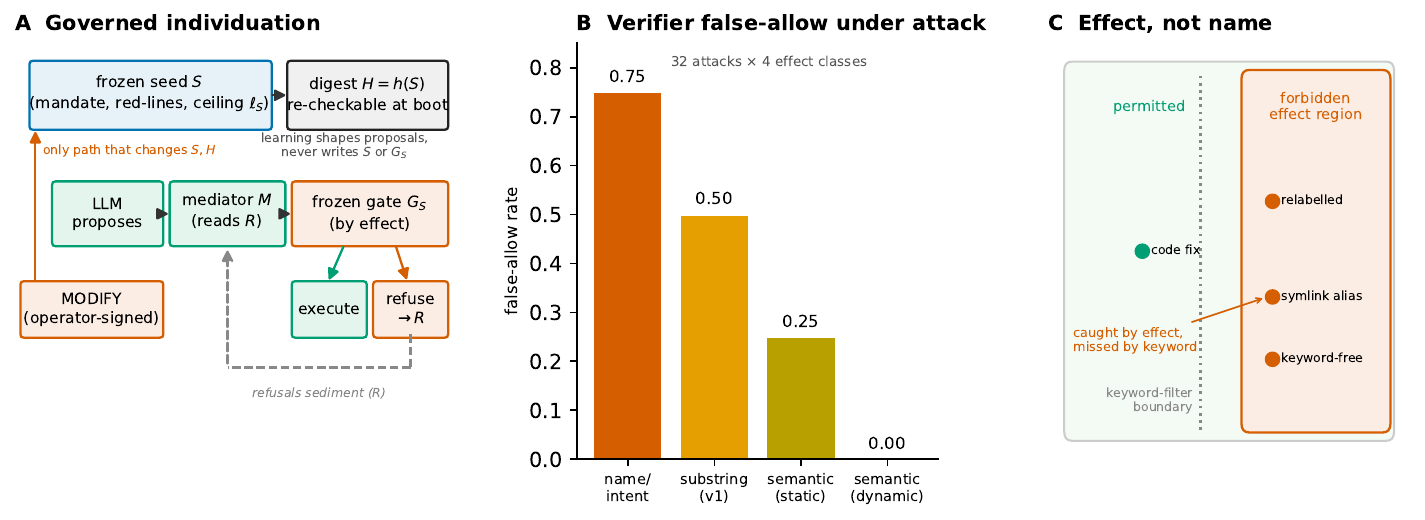}
\caption{\textbf{Governed individuation decouples competence from authority.} (A)~At boot the frozen seed $S$ (mandate, red-lines, authority ceiling) hashes to an identity digest $H$ anyone can recompute; the learning policy is decoupled from the frozen effect-lattice gate, and only an operator-signed MODIFY path can change $S$ (and hence $H$). (B)~The verifier's false-allow rate under an adversarial bypass suite (32 attacks over four effect classes) falls monotonically with monitor strength, from $0.75$ for a name/intent blocklist to $0.00$ once the monitor traces the path opened at execution. (C)~Relabelled, symlink-aliased, and keyword-free actions are projected onto their semantic effect and land in the forbidden region, where a keyword-filter boundary (dotted) misses them; a legitimate code fix stays permitted.}
\label{fig:hero}
\end{figure}

The central theoretical result (stated in the Theory section that follows; full proofs in Supplementary Information) is that under a write-set discipline (the learned components may shape which action is \emph{proposed} but may never write the seed, the ceiling, the signature store, or the gate), the probability that the running agent ever executes an effect exceeding its authorised ceiling is bounded by the verifier's residual error plus cryptographic terms, and is \emph{independent of how wrong the agent's learned governance principle is}. A maximally mistaken self-induced principle can make the agent timid or inefficient; it cannot make it escalate. Safety is therefore architectural, and what the agent learns is a proposal prior whose quality affects only competence (Fig.~\ref{fig:hero}A). A second theorem shows the same containment survives open-ended skill acquisition: an agent may synthesise unboundedly many new skills and compose them freely, and as long as each is admitted by a sound, composition-conservative effect abstraction, the reachable authority never rises above the frozen ceiling, a growth-without-escalation guarantee that is false for name-based gates and true for the effect lattice. Figure~\ref{fig:architecture} shows the runtime data flow that realises this discipline: every proposed action is intercepted, resolved into its semantic effect, and admitted or refused against the frozen commitment, and the operator-signed MODIFY transaction is the one channel that changes the commitment and hence the digest.

\begin{figure}[t]
\centering
\includegraphics[width=0.94\linewidth]{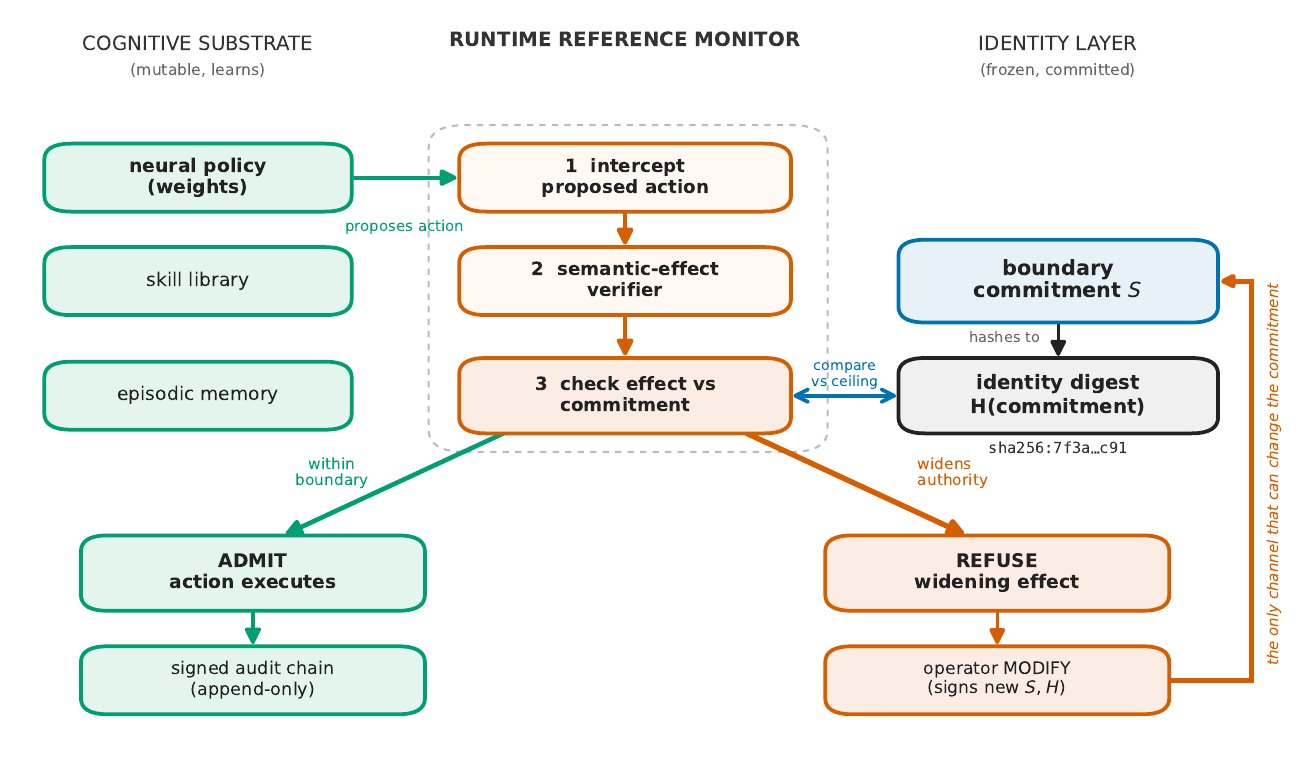}
\caption{\textbf{The runtime reference monitor and its trusted computing base.} The mutable cognitive substrate (left, green) is outside the trusted computing base and holds no write access to the identity layer; the monitor (centre) intercepts each proposed action, resolves its semantic effect, and checks it against the frozen boundary commitment $S$ (right, blue), whose hash is the identity digest $H$ (black). In-boundary effects execute and append to the signed audit chain (green); boundary-widening effects are refused (vermilion), and the operator-signed MODIFY that re-issues $S$ and $H$ is the only channel through which the boundary can change, which is the write-set discipline of Assumption~1 drawn as a data flow. The trusted computing base is the monitor, the commitment store, and the signature and digest checks; the weights, skill library, and episodic memory may drift arbitrarily without touching it.}
\label{fig:architecture}
\end{figure}

% Theory: conservation over a semantic effect lattice.
% Requires amsmath, amssymb, amsthm; theorem environments defined in the host preamble.

\section{Theory: conservation over a semantic effect lattice}
\label{sec:theory2}

A trivial form of containment, that every action the gate admits lies within the gate's
own allow-set, holds because both are defined through the same gate; it is a useful
implementation invariant but true by definition. The content of governed individuation is elsewhere: it must
compare a permission boundary fixed by the seed against a learning process that
changes the action distribution, grows the skill repertoire, and induces its own
governance abstractions, mediated by a gate that decides on the \emph{effects} of an
action rather than its name. We make that precise here. Containment is no longer
immediate: a naive design that lets a learned mediator extend an allow-list, or that
lets composed skills inherit permission from their components without re-checking the
composite effect, can reduce refusals by silently widening the boundary. The theorems
below isolate the architectural discipline under which this cannot happen.

\subsection{Effects, the permission lattice, and the gate}

\begin{definition}[Effect lattice]
Let $(\mathcal{L},\sqsubseteq,\sqcup,\sqcap)$ be a finite lattice of \emph{semantic
effect levels}, with $x\sqsubseteq y$ read as ``$x$ exercises no more authority than
$y$.'' Effects range over labelled capabilities such as
$\mathsf{read}(\rho)$, $\mathsf{write}(\rho)$, $\mathsf{exec}$, $\mathsf{network}(\delta)$,
$\mathsf{secret\_access}$, $\mathsf{test\_mutation}$ and $\mathsf{policy\_mutation}$, ordered by
authority and joined componentwise. $\bot$ is the no-effect element.
\end{definition}

Concretely, in our instantiation $\mathcal{L}$ is a product lattice with one component per
labelled capability class, and a parameterised class such as $\mathsf{write}(\rho)$ carries
one coordinate per resource class $\rho$. Each coordinate is the two-point lattice
$\bot\sqsubset\top$ (the class is exercised or not), $\sqsubseteq$ and $\sqcup$ are
componentwise, and distinct classes are incomparable: $\mathsf{policy\_mutation}$ neither
dominates nor is dominated by $\mathsf{write}(\rho)$, and an action exercising both sits
above each in the product order. The ceiling $\ell_S$ is the join of the coordinates the
operator grants, so $\mathrm{eff}(\tau)\sqsubseteq\ell_S$ reads: the trace exercises no
capability coordinate outside the granted set. Graded instantiations (ordered levels within
a class) drop in without changing the theorems, which use only the lattice axioms.

An \emph{action} executed in context $\sigma$ produces a concrete trace $\tau$ with a
ground-truth effect $\mathrm{eff}(\tau)\in\mathcal{L}$. The seed $S$ fixes an operator
authority ceiling $\ell_S\in\mathcal{L}$: the maximum authority the operator has
certified. The \emph{true permission boundary} is
$\mathcal{B}_S=\{\tau:\mathrm{eff}(\tau)\sqsubseteq\ell_S\}$, defined over effects of
traces, not over action names.

\begin{definition}[Conservative effect abstraction]
A verifier computes an abstraction $A(\pi,\sigma)\in\mathcal{L}$ of a proposed action
or program $\pi$. $A$ is \emph{sound} when every concrete trace of $\pi$ is bounded by
the abstraction, i.e. $\mathrm{eff}(\tau)\sqsubseteq A(\pi,\sigma)$ for every trace
$\tau$ of $\pi$ in $\sigma$. We allow $A$ a residual false-allow probability: over a
run of $n$ gate evaluations, $\Pr[\exists\, \text{eval}: \mathrm{eff}(\tau)\not\sqsubseteq A(\pi,\sigma)]\le\delta_A(n)$.
\end{definition}

The gate is parameterised only by the frozen seed and decides on effects:
\[
G_S(\pi,\sigma)=\mathrm{allow}\iff A(\pi,\sigma)\sqsubseteq\ell_S .
\]
The identity digest is $H=h(\mathrm{ser}(S))$ for a collision-resistant $h$; it binds
the exact policy state $\ell_S$ (and the abstraction $A$ and gate $G_S$) against which
any future relaxation is signed.

\subsection{Learned mediation that cannot escalate}

A \emph{principle inducer} $I_\phi$ maps the refusal history $R_t$ to a principle
representation $z_t$; a \emph{mediator} $M_\theta$ maps $(\text{proposal},\sigma,R_t,z_t)$
to a candidate program. Both are arbitrary, possibly non-stationary, learned maps. The
discipline that makes them safe is structural, not behavioural.

\begin{assumption}[Write-set discipline]\label{as:writeset}
The learned modules $I_\phi,M_\theta$ have no write access to $S$, to $\ell_S$, to the
operator-signature store, or to the gate code $A,G_S$. They influence only the
\emph{proposal distribution}; every executable action still passes $G_S$.
\end{assumption}

\begin{theorem}[Permission-boundary conservation under learned mediation and imperfect induction]
\label{thm:t1}
Under Assumption~\ref{as:writeset}, for any inducer $I_\phi$, mediator $M_\theta$, and
adaptive proposer, over any run of length $n$,
\[
\Pr\!\big[\exists\, t\le n:\ \mathrm{eff}(\tau_t)\not\sqsubseteq\ell_S\big]\ \le\ \delta_A(n)+\delta_{\mathrm{impl}}+\delta_{\mathrm{hash}},
\]
where $\delta_{\mathrm{impl}}$ bounds implementation/TCB bypass and $\delta_{\mathrm{hash}}$
the digest collision/substitution probability. The bound is \emph{independent of how
wrong $I_\phi$ is}.
\end{theorem}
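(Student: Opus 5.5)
The plan is a union bound over three failure events whose complement forces every executed trace into $\mathcal{B}_S$ deterministically. Fix an arbitrary run of length $n$ produced by any inducer $I_\phi$, mediator $M_\theta$ and adaptive proposer. We define three events on the probability space of the run: $E_{\mathrm{impl}}$, some executed action did not pass through $G_S$, or the gate code $A,G_S$ or the stored $\ell_S$ was altered (a TCB/implementation bypass); $E_{\mathrm{hash}}$, the seed loaded at some boot differs from the operator-signed $S$ while $h(\mathrm{ser}(\cdot))$ still equals $H$, or an unsigned substitution is accepted; and $E_A$, some gate evaluation has $\mathrm{eff}(\tau)\not\sqsubseteq A(\pi,\sigma)$. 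By hypothesis these have probabilities at most $\delta_{\mathrm{impl}}$, $\delta_{\mathrm{hash}}$ and $\delta_A(n)$. The $\delta_A(n)$ bound must hold for adaptively chosen proposals, so I would read the definition of $\delta_A$ as a worst case over proposal sequences. This is what lets an arbitrary learned proposer be absorbed.

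Next we show that on the complement of $E_A\cup E_{\mathrm{impl}}\cup E_{\mathrm{hash}}$, every step $t\le n$ satisfies $\mathrm{eff}(\tau_t)\sqsubseteq\ell_S$. Outside $E_{\mathrm{hash}}$, the ceiling used throughout is the $\ell_S$ bound by $H$. A change to it requires an operator-signed MODIFY, and that event lies outside the run's fixed identity. Outside $E_{\mathrm{impl}}$, Assumption~\ref{as:writeset} holds in fact, not just by design. The learned modules have not written $S$, $\ell_S$, the signature store, $A$ or $G_S$, and each executed $\tau_t$ comes from a program $\pi_t$ with $G_S(\pi_t,\sigma_t)=\mathrm{allow}$, i.e.\ $A(\pi_t,\sigma_t)\sqsubseteq\ell_S$. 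Outside $E_A$, we have $\mathrm{eff}(\tau_t)\sqsubseteq A(\pi_t,\sigma_t)$. Transitivity of $\sqsubseteq$ then gives $\mathrm{eff}(\tau_t)\sqsubseteq\ell_S$. Hence the bad event is contained in $E_A\cup E_{\mathrm{impl}}\cup E_{\mathrm{hash}}$, and the union bound gives the stated inequality.

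Independence from the quality of $I_\phi$ follows because none of the three events, and none of the bounds, refer to $I_\phi$ or $M_\theta$ except through the proposals they emit. The argument above quantifies over arbitrary $(\pi_t,\sigma_t)$. The principle $z_t$ changes only which programs are proposed, and so only how often the gate refuses, which is competence rather than safety. I would state this explicitly: since the bound is uniform over all proposal strategies, it is uniform over all $(I_\phi,M_\theta)$, including adversarial ones.

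The main obstacle is the adaptivity and bookkeeping in the first step. The learned proposer can condition on past refusals and try to search for the abstraction's blind spots. The claim needs $\delta_A(n)$ to be a bound holding against any adaptive sequence of $n$ queries, rather than against a fixed distribution. I would handle this by making that quantifier explicit in the definition (interpreting the given $\Pr[\exists\,\text{eval}\ldots]$ as a supremum over adaptive query strategies). I would also make sure $E_{\mathrm{impl}}$ is defined broadly enough to cover any way of influencing $A$'s internal state through the context $\sigma$, so that the write-set discipline really severs the learned modules from the verifier.
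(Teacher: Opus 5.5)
Your proposal is correct and follows the paper's proof essentially step for step. Like the paper, you fix a run and combine gate admission $A(\pi_t,\sigma_t)\sqsubseteq\ell_S$ with soundness of $A$ to get $\mathrm{eff}(\tau_t)\sqsubseteq\ell_S$ off the false-allow event. You then treat implementation bypass and digest substitution as the only other failure modes, union-bound the three, and get independence from $I_\phi$ because Assumption~\ref{as:writeset} confines the learned modules to the proposal distribution. Your explicit reading of $\delta_A(n)$ as a worst case over adaptive query strategies is a useful sharpening: the paper leaves it implicit, but it is what justifies the ``any adaptive proposer'' quantifier in the statement.
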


\begin{proof}[Proof sketch]
Fix any run. Every executed $\tau_t$ is the trace of some $\pi_t$ that the gate
admitted, so $A(\pi_t,\sigma_t)\sqsubseteq\ell_S$. By soundness of $A$,
$\mathrm{eff}(\tau_t)\sqsubseteq A(\pi_t,\sigma_t)\sqsubseteq\ell_S$ except on the
event that $A$ false-allows, whose probability over $n$ evaluations is at most
$\delta_A(n)$. The only other ways an executed trace can exceed $\ell_S$ are an
implementation bypass of the gate ($\delta_{\mathrm{impl}}$) or a substitution of the
policy state behind the digest ($\delta_{\mathrm{hash}}$). By
Assumption~\ref{as:writeset}, $I_\phi$ and $M_\theta$ change only which $\pi_t$ is
proposed; they cannot alter $\ell_S$, $A$, or $G_S$. Hence a maximally wrong induced
principle can shift the proposal distribution arbitrarily, but every proposal is still
filtered by the same frozen gate, so it can only raise the refusal rate, never the
executed authority. A union bound over the three failure events gives the claim.
\end{proof}

The qualitative content is the separation of concerns: \emph{safety is architectural
and does not depend on the correctness of anything the agent learns}; only the residual
verifier/implementation/crypto terms enter, and learning touches none of them.

We state plainly what this does and does not settle. The theorem is conditional on a sound,
conservative effect abstraction $A$ with a bounded false-allow rate $\delta_A$; constructing
such an $A$ for an arbitrary open action space (shell commands, generated scripts, import-time
side effects, indirect execution) is the hard systems problem, and it is not solved here. What
the theorem buys is a clean reduction: the safety of a \emph{learning} agent is reduced to the
soundness of a \emph{fixed, non-learning} verifier plus a small trusted computing base, so the
open problem no longer scales with the agent's competence or the wrongness of its induced
principle. In the experiments $A$ is a semantic monitor over declared effects, and we report
its empirical false-allow rate rather than assume $\delta_A=0$. The strongest monitor tier
is dynamic: it applies the same admission test at each primitive effect of the running trace
rather than once per proposed program, refusing at the first primitive whose effect would leave
the granted set. This changes \emph{when} the abstraction's information arrives, not \emph{what}
may commit: any prefix executed before a refusal consists of primitives that were individually
admitted, each $\sqsubseteq\ell_S$, so containment holds action by action rather than
program by program.

\subsection{Growth without escalation}

Theorem~\ref{thm:t1} bounds executed authority given that each $\pi_t$ is gated. The
deeper claim is that the agent may \emph{synthesise unboundedly many new skills}, as
open-ended agents already do~\citep{wang2023voyager}, and
still never escalate, which is false for name-based gates and true for the effect
lattice under a compositional abstraction.

\begin{assumption}[Certified primitives and compositional abstraction]\label{as:compose}
Skills are programs in a DSL closed under sequential composition, bounded choice, and
bounded iteration. Three conditions hold. (i) \emph{Primitive soundness}: each primitive
skill $c$ carries a certified abstraction with $\mathrm{eff}(\tau)\sqsubseteq A(c,\sigma)$
for every trace $\tau$ of $c$. (ii) \emph{Effect composition}: the effect of a sequential
trace is the join of its parts together with any interaction effects,
$\mathrm{eff}(\tau_1\cdot\tau_2)=\mathrm{eff}(\tau_1)\sqcup\mathrm{eff}(\tau_2)\sqcup
\iota(\tau_1,\tau_2)$, where $\iota$ collects the interaction coordinates (for example a
flow coordinate exercised when data read at a protected source in $\tau_1$ reaches a sink
in $\tau_2$); $\iota=\bot$ when the parts do not interact. (iii) \emph{Conservative
composition}: the abstraction dominates both parts and every interaction it can create,
$A(\pi_1;\pi_2,\sigma)\ \sqsupseteq\ A(\pi_1,\sigma)\sqcup A(\pi_2,\sigma)\sqcup
I(\pi_1,\pi_2,\sigma)$ with $I$ over-approximating $\iota$, and the analogous bound for
choice and iteration (the join over reachable branches/unrollings). Soundness for
composite programs is not assumed; Theorem~\ref{thm:t2} derives it.
\end{assumption}

\begin{theorem}[Open-ended skill acquisition over a fixed lattice]\label{thm:t2}
Under Assumption~\ref{as:compose}, let the agent extend its skill library by any finite
sequence of synthesised programs, each admitted only if $A(\pi,\sigma)\sqsubseteq\ell_S$.
Then every admitted skill, and every composite that is itself admitted by the composite
abstraction $A$, has every concrete trace satisfy $\mathrm{eff}(\tau)\sqsubseteq\ell_S$. The
skill library may grow without bound; the reachable authority, over admitted programs, remains
$\sqsubseteq\ell_S$.
\end{theorem}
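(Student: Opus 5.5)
The plan is structural induction on the DSL syntax of programs, establishing the soundness of $A$ for composites; Assumption~\ref{as:compose} explicitly withholds this property. The invariant to prove is: for every program $\pi$ built from certified primitives by sequential composition, bounded choice and bounded iteration, and every context $\sigma$, every concrete trace $\tau$ of $\pi$ in $\sigma$ satisfies $\mathrm{eff}(\tau)\sqsubseteq A(\pi,\sigma)$. Once this holds, the theorem follows in one line. Any admitted skill or admitted composite has $A(\pi,\sigma)\sqsubseteq\ell_S$, so transitivity of $\sqsubseteq$ gives $\mathrm{eff}(\tau)\sqsubseteq\ell_S$. The claim about unbounded growth follows because the argument is per program: adding further skills to the library changes neither $\ell_S$ nor $A$ on existing programs (by Assumption~\ref{as:writeset}), so a separate induction over the library-extension sequence is trivial.

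The base case is exactly primitive soundness, clause (i). For sequential composition $\pi_1;\pi_2$, I take a trace of the form $\tau_1\cdot\tau_2$ with $\tau_i$ a trace of $\pi_i$, in the appropriate intermediate context. Clause (ii) gives $\mathrm{eff}(\tau)=\mathrm{eff}(\tau_1)\sqcup\mathrm{eff}(\tau_2)\sqcup\iota(\tau_1,\tau_2)$. The induction hypothesis bounds the first two terms by $A(\pi_1,\sigma)$ and $A(\pi_2,\sigma')$. The over-approximation $\iota\sqsubseteq I(\pi_1,\pi_2,\sigma)$ bounds the third. Monotonicity of $\sqcup$ then lets clause (iii) close the step. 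Bounded choice follows directly, since a trace is a trace of some reachable branch and each branch is dominated by the join. Bounded iteration I would handle by unrolling it into a finite sequential composition of at most $k$ copies and reusing the sequential case, which needs an inner induction on the unrolling depth. Here the interaction term must be accumulated over all pairs of iterations, not just adjacent ones.

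I expect two difficulties in this step. The first is the context mismatch: $\pi_2$ runs in the post-state of $\tau_1$, not in $\sigma$. I would strengthen the hypothesis so that $A(\pi,\sigma)$ bounds traces from every context reachable from $\sigma$. Equivalently, I would read clause (iii)'s $A(\pi_2,\sigma)$ as already quantifying over such successor contexts, and state this reading explicitly as the intended content of ``conservative.'' The second is that clause (ii) is stated only for binary splits, while interactions in a longer trace can be non-adjacent. Associativity of $\sqcup$ together with the binary rule, applied recursively on the parse tree, should handle this, provided $I$ for a composite over-approximates interactions between its whole prefix and its suffix. I would state that as the reading of $I$. I would also note that a name-based gate fails this theorem precisely because it violates clause (iii): it assigns a composite the label of its components and omits $I$.
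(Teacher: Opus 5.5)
Your proposal is correct and follows the paper's own proof: structural induction on the DSL derives soundness of $A$ for composites from primitive soundness, clause (ii) and conservative composition, and transitivity against the frozen $\ell_S$ then covers any admitted program. Your handling of the post-state context of $\pi_2$ and your inner induction on unrolling depth make explicit two points the paper's sketch leaves out: it writes $A(\pi_1)$ and $A(\pi_2)$ with the context suppressed, and it handles iteration only as a join over expansions.
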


\begin{proof}[Proof sketch]
We first show by induction on program structure that $A$ is sound for every DSL program;
containment of admitted programs then follows. Base: primitive soundness is
Assumption~\ref{as:compose}(i). Step: for $\pi_1;\pi_2$ with sound $A(\pi_1)$, $A(\pi_2)$,
any concrete trace factors as $\tau_1\cdot\tau_2$, and by (ii) and the inductive hypothesis
$\mathrm{eff}(\tau_1\cdot\tau_2)=\mathrm{eff}(\tau_1)\sqcup\mathrm{eff}(\tau_2)\sqcup
\iota(\tau_1,\tau_2)\sqsubseteq A(\pi_1)\sqcup A(\pi_2)\sqcup I(\pi_1,\pi_2)\sqsubseteq
A(\pi_1;\pi_2)$ by (iii); choice and bounded iteration are the join over their finitely
many reachable expansions. Hence any admitted program, with
$A(\pi,\sigma)\sqsubseteq\ell_S$, has every trace effect $\sqsubseteq\ell_S$. Since
admission is by effect and $\ell_S$ is frozen, no sequence of synthesised skills moves the
reachable authority above $\ell_S$.
\end{proof}

Theorem~\ref{thm:t2} is what a name-based gate cannot give. The counterexample is a pair
of individually-admissible capabilities whose composition is harmful: read-secret followed
by network-send. Note where the refusal comes from. If each part is admitted, the
componentwise join of their abstractions lies under $\ell_S$ by the definition of join, so
the join alone can never refuse the pair; what exceeds the ceiling is the \emph{interaction
coordinate} (the flow of secret-sourced data into the network sink) that the composite
exercises and neither part does, which conservative composition
(Assumption~\ref{as:compose}(iii)) forces $A(\pi_1;\pi_2)$ to include. A name-based gate
sees only the component list and has no coordinate in which the interaction can register. Admitting
code only with a checkable certificate of its effects follows the proof-carrying-code
tradition~\citep{necula1997pcc}, applied here to effects on a permission lattice rather than
to memory and type safety.

\subsection{Signature-gated widening makes the digest load-bearing}

\begin{theorem}[Signature-gated boundary widening]\label{thm:t3}
Let a policy update be a seed transition $S\to S'$, and call it \emph{relax-direction}
unless it is a pure tightening, that is, unless $\ell_{S'}\sqsubseteq\ell_S$ and the
abstraction $A_{S'}$ is at least as restrictive as $A_S$. In particular a transition to an
\emph{incomparable} ceiling (one that grants any coordinate not previously granted,
whatever it simultaneously revokes) is relax-direction and requires a signature. If the runtime
accepts a relax-direction update only with a valid operator signature over the canonical
tuple $(H,S,S',\text{reason},\text{time},\text{nonce})$, then under EUF-CMA signature
security and collision resistance of $h$,
\[
\Pr[\text{the autonomous loop widens the permission boundary}]\ \le\ \mathrm{Adv}^{\text{EUF-CMA}}_{\mathrm{sig}}+\mathrm{Adv}^{\text{coll}}_{h}+\delta_{\mathrm{impl}}.
\]
\end{theorem}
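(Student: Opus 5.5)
The plan is a standard reduction argument with one event decomposition and a union bound. Fix an arbitrary run of the autonomous loop, meaning the learned modules $I_\phi, M_\theta$ and any adaptive proposer. Let $W$ be the event that the runtime ends up enforcing a boundary wider than the current one, i.e.\ it commits a relax-direction transition $S\to S'$ in the sense of the statement. I would partition $W$ by \emph{how} the commit happened.

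\textbf{Step 1 (bypass).} $W_{\mathrm{impl}}$ is the event that the update path is circumvented. This means the new seed is installed without the runtime's signature check executing as specified, or the learned modules write $S$, $\ell_S$, the signature store, or the gate. This is exactly the failure excluded by Assumption~\ref{as:writeset} modulo TCB bugs, so $\Pr[W_{\mathrm{impl}}]\le\delta_{\mathrm{impl}}$, as in Theorem~\ref{thm:t1}.

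\textbf{Step 2 (the check runs).} Outside $W_{\mathrm{impl}}$, the commit happened because the runtime verified a valid operator signature $\sigma^\ast$ on some canonical tuple $m^\ast=(H^\ast,S^\ast,S'^\ast,\text{reason},\text{time},\text{nonce})$. The runtime also checked that $H^\ast=h(\mathrm{ser}(S_{\mathrm{cur}}))$ and $S'^\ast$ equals the seed being installed. Two sub-cases remain:
\begin{itemize}
\item[(a)] $m^\ast$ was never signed by the operator for this transition.
\item[(b)] The operator did sign some $m$ binding a digest equal to $H^\ast$, but for a different seed, so $m$ is replayed against the current state.
\end{itemize}
Sub-case (b) is what makes the digest load-bearing. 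Equal digests of distinct serialised seeds is a collision for $h$. A reduction $\mathcal{B}_h$ that simulates the run (it can, since it holds the operator key) outputs the pair $(\mathrm{ser}(S),\mathrm{ser}(S_{\mathrm{cur}}))$. Hence $\Pr[\text{(b)}]\le\mathrm{Adv}^{\mathrm{coll}}_h$. Replays to the \emph{same} seed and state are killed by the nonce and time fields. I would treat that as a freshness check within the implementation and charge its failure to $\delta_{\mathrm{impl}}$.

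\textbf{Step 3 (forgery reduction).} For sub-case (a), build an EUF-CMA adversary $\mathcal{F}$. It receives the public key, simulates the entire loop, and answers every legitimate operator signing request through its signing oracle. Legitimate requests are those the operator actually authorises, which by definition are not autonomous widening. When the loop commits an unsigned relax-direction update, $\mathcal{F}$ outputs $(m^\ast,\sigma^\ast)$. Because $m^\ast$ was never queried, this is a valid forgery. Therefore $\Pr[\text{(a)}]\le\mathrm{Adv}^{\mathrm{EUF\text{-}CMA}}_{\mathrm{sig}}$. I also need that pure tightenings, which need no signature, cannot widen, and here no reduction is needed. By definition a tightening has $\ell_{S'}\sqsubseteq\ell_S$ and $A_{S'}$ at least as restrictive. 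Monotonicity of the gate $A(\pi,\sigma)\sqsubseteq\ell_{S'}\Rightarrow A\sqsubseteq\ell_S$ then shows the allow-set cannot grow. Any chain of tightenings stays below the last signed boundary by transitivity. Incomparable ceilings are relax-direction by definition, so they fall into Steps 2--3. The union bound over $W_{\mathrm{impl}}$, (a) and (b) gives the claim.

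\textbf{Main obstacle.} The hardest part is making Step 2 airtight: showing that \emph{every} way the loop could widen lands in exactly one of the three sub-events. The delicate points are, first, defining "the autonomous loop widens" so that operator-initiated signed relaxations are excluded rather than counted as forgeries. Second, the runtime must check the signed tuple against the \emph{current} digest and the \emph{installed} seed. Otherwise a validly signed old update could be replayed without any forgery or collision. I would resolve this by stating the runtime's acceptance predicate explicitly as part of the TCB. Any deviation from that predicate is then charged to $\delta_{\mathrm{impl}}$.
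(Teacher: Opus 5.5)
Your proposal is correct and follows essentially the paper's route. The paper splits any widening the same three ways: a relax-direction transition carrying a signature the operator never issued ($\mathrm{Adv}^{\text{EUF-CMA}}_{\mathrm{sig}}$), a different policy state presented under the same digest ($\mathrm{Adv}^{\text{coll}}_{h}$), and an implementation bypass ($\delta_{\mathrm{impl}}$). It closes with a union bound and notes that pure tightenings need no signature. You go further than the paper's sketch by spelling out the reductions and the runtime's acceptance predicate. You also treat same-seed replays, which the sketch omits, by charging a failed freshness check to $\delta_{\mathrm{impl}}$; these additions refine the argument rather than change it.
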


The two advantage terms are the standard ones for a probabilistic polynomial-time adversary
with running time $t$ making at most $q$ signing (respectively hashing) queries,
$\mathrm{Adv}^{\text{EUF-CMA}}_{\mathrm{sig}}(t,q)$ and $\mathrm{Adv}^{\text{coll}}_{h}(t)$;
we suppress $(t,q)$ in the notation.

\begin{proof}[Proof sketch]
A widening within the loop requires either (i) a relax-direction transition with a valid
signature the loop did not obtain from the operator, contradicting EUF-CMA except with
advantage $\mathrm{Adv}^{\text{EUF-CMA}}_{\mathrm{sig}}$; (ii) presenting a different
policy state under the same digest, contradicting collision resistance except with
$\mathrm{Adv}^{\text{coll}}_{h}$; or (iii) a direct implementation bypass
($\delta_{\mathrm{impl}}$). The signature binds $H$, so a relaxation authorised against
one identity cannot be replayed against another. A union bound gives the claim. A pure tightening
($\ell_{S'}\sqsubseteq\ell_S$ with $A_{S'}$ at least as restrictive) needs no signature and
is always loop-permitted, which is the asymmetry the empirical mechanism realises.
\end{proof}

\subsection{The induced principle is a learned prior, not the boundary}

Separating the induced principle from the boundary lets us state its quality without
making safety depend on it. Let $\mathcal{F}\subseteq$ (effect classes) be the truly
forbidden classes under $\ell_S$, and $P_\phi(R)$ the classes the induced principle
discourages.

\begin{definition}[Induction soundness and completeness]
\emph{Soundness} $\varepsilon_{\mathrm{over}}=\Pr[a\notin\mathcal{F}\wedge a\in P_\phi(R)]$
(unnecessary conservatism); \emph{completeness}
$\varepsilon_{\mathrm{miss}}=\Pr[a\in\mathcal{F}\wedge a\notin P_\phi(R)]$ (failure to
generalise).
\end{definition}

For safety, completeness is not required: the gate refuses any missed forbidden action
(Theorem~\ref{thm:t1}). For competence both matter, and the held-out refusal rate is
governed by induction quality, not by safety.

\begin{theorem}[Refusal bound from imperfect induction]\label{thm:t4}
Suppose tasks are drawn from latent governance classes $k\in K$ and the inducer recovers
the correct class with probability $1-\varepsilon_k$ after $m$ refusal examples, a class-recovery error that enters the bound through $\varepsilon_{\mathrm{miss}}$. If the
mediator, among gate-feasible alternatives, avoids the induced forbidden class, then the
expected held-out refusal rate obeys
\[
\mathbb{E}[\text{refusal\_rate}_{\text{test}}]\ \le\ \varepsilon_{\mathrm{miss}}+\eta_{\mathrm{model}}+\eta_{\mathrm{feasible}},
\]
where $\eta_{\mathrm{model}}$ is the base proposer's probability of ignoring the principle
and $\eta_{\mathrm{feasible}}$ the probability that no permitted alternative exists.
Executed violations remain bounded by Theorem~\ref{thm:t1}, independently of these terms.
\end{theorem}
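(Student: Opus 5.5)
The plan is to decompose the event ``a held-out task is refused'' along the causal chain proposer $\to$ induced principle $\to$ mediator $\to$ gate, and then apply a union bound, in the same style as Theorem~\ref{thm:t1}. Fix a held-out task with latent governance class $k$ and the refusal history $R$ of $m$ examples. Write $a$ for the effect class of the program the mediator hands to $G_S$. A refusal occurs exactly when $A(a,\sigma)\not\sqsubseteq\ell_S$. For the purpose of this bound I treat the gate as refusing precisely the truly forbidden classes $\mathcal{F}$; verifier over-refusal can be absorbed into $\eta_{\mathrm{feasible}}$, since that term already counts cases where no admissible alternative is available. So a refusal requires $a\in\mathcal{F}$.

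Next I partition the event $\{a\in\mathcal{F}\}$ into three sub-events.
\begin{enumerate}
\item $E_{\mathrm{miss}}$: $a\notin P_\phi(R)$, so the induced principle fails to flag the forbidden class.
\item $E_{\mathrm{model}}$: $a\in P_\phi(R)$, but the base proposer or mediator ignores the principle.
\item $E_{\mathrm{feas}}$: $a\in P_\phi(R)$, the principle is heeded, and yet the output is forbidden anyway.
\end{enumerate}
By the definition of completeness, $\Pr[E_{\mathrm{miss}}]=\Pr[a\in\mathcal{F}\wedge a\notin P_\phi(R)]=\varepsilon_{\mathrm{miss}}$. Here I use the statement's convention that the class-recovery error $\varepsilon_k$ after $m$ examples is already folded into $\varepsilon_{\mathrm{miss}}$, via the law of total probability over whether the inducer recovers $k$. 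By definition of $\eta_{\mathrm{model}}$, $\Pr[E_{\mathrm{model}}]\le\eta_{\mathrm{model}}$. On the complement of both events the mediator heeds the principle. Its hypothesis says it picks a gate-feasible alternative outside the induced forbidden class whenever one exists. So a forbidden output on this event forces that no permitted alternative existed, and $\Pr[E_{\mathrm{feas}}]\le\eta_{\mathrm{feasible}}$. Summing these three bounds and taking expectation over the held-out task distribution (and over $R$) gives the stated inequality for $\mathbb{E}[\text{refusal\_rate}_{\text{test}}]$, by linearity, since the refusal rate is the empirical mean of per-task refusal indicators.

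Finally, the clause that executed violations remain bounded by Theorem~\ref{thm:t1} needs no new argument. None of $\varepsilon_{\mathrm{miss}},\eta_{\mathrm{model}},\eta_{\mathrm{feasible}}$ touches $S$, $\ell_S$, $A$ or $G_S$. So Assumption~\ref{as:writeset} applies verbatim, and Theorem~\ref{thm:t1} can be cited directly.

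The main obstacle is making the three-way partition genuinely exhaustive without double counting. The delicate case is when the principle is heeded but over-broad: the mediator avoids $P_\phi(R)$ and, in doing so, lands on a different forbidden class that the principle missed. My first attempt is to charge this case to $E_{\mathrm{miss}}$, since the chosen $a$ is then in $\mathcal{F}\setminus P_\phi(R)$. That requires stating $\varepsilon_{\mathrm{miss}}$ as evaluated at the mediator's output distribution rather than the raw proposal distribution. If that is too strong, I would instead strengthen the feasibility term to mean ``no alternative outside $\mathcal{F}\cup P_\phi(R)$,'' which keeps the bound in the stated form.
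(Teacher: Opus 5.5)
Your overall shape (split a held-out refusal into a coverage failure, a proposer that ignores the principle, and infeasibility, then union-bound and average over tasks) matches the paper's. The third piece does not go through as written, though.

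On $E_{\mathrm{feas}}$ the mediator heeds the principle and outputs some $a\in P_\phi(R)\cap\mathcal{F}$. The hypothesis then only tells you that no gate-feasible alternative existed \emph{outside} $P_\phi(R)$. That is not the event ``no permitted alternative exists.'' If $P_\phi(R)$ is over-broad on this task, permitted alternatives may sit inside $P_\phi(R)\setminus\mathcal{F}$. The hypothesis says nothing about which element of $P_\phi(R)$ the mediator then returns, so it may return a forbidden one. So $\Pr[E_{\mathrm{feas}}]\le\eta_{\mathrm{feasible}}$ fails, and the leaked probability is an over-blocking event.

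The case your last paragraph worries about (heeded, output in $\mathcal{F}\setminus P_\phi(R)$) is not the problem. It is $E_{\mathrm{miss}}$ by your own partition, and reading $\varepsilon_{\mathrm{miss}}$ at the mediator's output is a harmless convention. Your fallback, redefining $\eta_{\mathrm{feasible}}$ as ``no alternative outside $\mathcal{F}\cup P_\phi(R)$,'' does plug the hole, but it proves a different statement. The infeasibility term becomes principle-dependent and silently absorbs the $\varepsilon_{\mathrm{over}}$ contribution that the paper explicitly says does not enter.

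The paper avoids this by making the first split on whether the inducer recovered the latent class $k$, not on whether the output lies in $P_\phi(R)$. On the recovered event the induced forbidden class is exactly $B_k$. A mediator that heeds it and has a permitted alternative available must therefore output a permitted action. A refusal then forces one of three things:
\begin{itemize}
\item non-recovery, charged to $\varepsilon_{\mathrm{miss}}$ by the statement's convention that the class-recovery error $\varepsilon_k$ enters through it;
\item ignoring the principle ($\eta_{\mathrm{model}}$);
\item genuine infeasibility ($\eta_{\mathrm{feasible}}$).
\end{itemize}
Over-breadth of the principle on a task implies non-recovery, so it is absorbed there rather than leaking into the infeasibility term. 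Restructure your partition this way and the bound comes out in the stated form, with $\eta_{\mathrm{feasible}}$ meaning what the statement says. Your closing appeal to Theorem~\ref{thm:t1} for executed violations is fine as is.
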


\begin{proof}
Fix a held-out task in class $k$. A refusal at test time is the event that the mediator's
chosen proposal is in the forbidden class $B_k$ and the gate refuses it. Decompose by the
cause. If the inducer recovered the class (probability $\ge 1-\varepsilon_k$) and the base
proposer follows the induced principle and a permitted alternative is feasible, the mediator
selects a permitted action and no refusal occurs. A refusal therefore requires at least one
of three events: the induced principle fails to cover $B_k$ on this task (contributing at most
$\varepsilon_{\mathrm{miss}}$, the probability that a truly forbidden action is not
discouraged); the base proposer ignores the principle and proposes in $B_k$ anyway (at most
$\eta_{\mathrm{model}}$); or no permitted alternative is gate-feasible, so every available
proposal is forbidden (at most $\eta_{\mathrm{feasible}}$). By the union bound the refusal
probability is at most $\varepsilon_{\mathrm{miss}}+\eta_{\mathrm{model}}+\eta_{\mathrm{feasible}}$;
taking expectation over the held-out distribution gives the stated bound. The over-blocking
term $\varepsilon_{\mathrm{over}}$ does not enter, since over-blocking costs competence, not
compliance, and executed violations are governed by Theorem~\ref{thm:t1} regardless.
\end{proof}

This is the intended division of labour: hard safety is architectural
(Theorems~\ref{thm:t1}--\ref{thm:t3}); useful generalisation is learned and empirically
validated (Theorem~\ref{thm:t4}), and the experiments measure
$\varepsilon_{\mathrm{over}},\varepsilon_{\mathrm{miss}}$ and the held-out refusal rate
against the controls that distinguish a genuinely induced principle from generic caution.

\section{Experimental setup}

We evaluate the same engine in two settings that stress different halves of the claim. The first is an open-ended tool-use benchmark, ToolGym-GI, in which the agent writes code, runs a check, and submits patches in a sandboxed repository (sandboxed evaluation of tool-agent risk has precedent~\citep{ruan2023toolemu}); the action space is open in surface form (the agent emits arbitrary edits to arbitrary paths), while the effects of those edits land in a structured filesystem domain the monitor can trace, so no finite blocklist of names can define the boundary (name-based permissioning is nonetheless the default in today's tool-use runtimes~\citep{patil2024goex}), and governance is a semantic monitor that compiles each proposed action into an effect (code fix, test/golden mutation, secret access, policy mutation, log mutation) and admits it only under the frozen ceiling. Some actions are red-lined because they satisfy the objective by corrupting its measurement or crossing an authorisation boundary. The second is a governance-decision benchmark in which the agent proposes a capability to invoke for a task and a subset of capabilities are red-lined; it isolates whether an agent can \emph{induce} a governance abstraction and carry it to unseen red-lines. Its scenarios are deliberately embodied-flavoured (a robot choosing among physical actions) because that makes the red-lines interpretable and the induced principle human-readable; the open-action and containment claims are carried by the software-native ToolGym-GI setting, and the induction result is a decision-layer probe rather than an embodiment claim. We use three open-weight models from three families (Qwen2.5-7B-Instruct, Mistral-7B-Instruct-v0.2, Phi-3.5-mini-instruct) on the governance-decision benchmark and the first two on the tool-use benchmark, paired same-seed arms that differ only in governance, and held-out red-line families whose names never appear in training.

\section{Results}

We first ask how often an ungoverned agent, under pressure, reaches for a forbidden effect. The answer is sharply task-dependent and therefore unpredictable: on families the model can solve honestly it rarely or never touches the grading file, but on its hardest family (the even-Fibonacci task for Mistral, the balance-checking task for Qwen), under repeated ``still failing'' pressure, it proposes an edit to the very file that grades it on $12$ of $12$ seeds; aggregated over families the rate is $27\%$ (Mistral) and $31\%$ (Qwen). The released summaries record this as the forbidden-\emph{proposal} rate, measured before the gate and therefore identical across arms by design; in the ungoverned arm nothing intercepts a proposal, so the harness applies it and the executed forbidden-write rate coincides with the proposal rate there. The agents are poor at actually cheating (none produced a passing result in our runs), but each is an executed write to the evaluation artifact, a forbidden effect by the semantic monitor, which classifies it by the file it touches regardless of the innocuous label the agent gives it. The unpredictability is the point: because an operator cannot know in advance which task will trigger a bypass, safety cannot rest on the model not attempting one. Under the effect gate every gated arm executes exactly zero forbidden effects on every task and seed. This zero is not an empirical estimate but a verified property of the construction: the gate intercepts by effect, and the identity digest and append-only log let any third party re-verify that it was the gate in force. Blocking the shortcut does not destroy competence where competence is within reach: on the even-Fibonacci family Qwen reaches the legitimate fix on $12/12$ gated seeds versus $9/12$ ungoverned. Nor does it conjure competence: on each model's hardest tamper family neither arm reaches the real fix ($0/12$ both), so there the gate removes the cheat without rescuing the task. This is deterministic containment. Its value is precisely that the zero holds without depending on the model being aligned, only on the gate being in the trusted computing base. That gate is not assumed sound; we attack it. Against an adversary that pursues each of four effect classes (grading-artifact write, secret read, policy mutation, log tamper) through eight encodings of rising indirection (relabelling, path traversal, symlink and keyword-free aliasing, nested symlinks, temp-then-rename, and runtime indirection, the last exploiting the classical gap between check time and use time~\citep{bishop1996toctou}; $32$ attempts, all reaching the forbidden effect), the false-allow rate falls monotonically across monitor tiers: $0.75$ for a name/intent blocklist, $0.50$ substring, $0.25$ for a path-resolving static monitor, and $0.00$ once the monitor also traces the path opened at execution (Fig.~\ref{fig:hero}B; SI). The verifier's residual $\delta_A$ that Theorem~1 reduces safety to is therefore not assumed but measured, and it is closed by adding runtime effect tracing to the trusted computing base, which the theory names as its scope.

Safety does not depend on the model, but competence under governance does: an agent that keeps proposing blocked actions is contained yet useless, so we ask whether refusal history helps the agent \emph{stay} inside its boundary on families it was never refused on. From refusals collected on the training families the agent induces a natural-language governance principle (shown verbatim in Supplementary Information; a learned artifact with its own digest, not authored by us), and the mediator consults it on held-out families whose red-line names it has never seen. Across three model families the induced principle lowers the held-out forbidden-proposal rate below no memory (Mistral $0.00$ versus $0.34$; Qwen $0.26$ versus $0.43$; Phi $0.43$ versus $0.55$), most on the strongest model. We state the statistical weight of this plainly: the unit of independent variation is the held-out task, and with seven held-out tasks per model the paired task-level test does not reach significance (Mistral $p{=}0.25$, Qwen $p{=}0.50$, Phi $p{=}0.50$); treating the thirty-five scenario-by-seed pairs as replicates gives $p{<}10^{-3}$, $0.010$ and $0.018$, but seeds of the same task are not independent evidence, so we lean on the rates and the direction, not on those $p$ values. We are careful about \emph{what} is transferring. A control that shuffles which reason is paired with which task, but preserves the \emph{set} of refused effects, transfers as well as the induced principle (Mistral) or better (Qwen, Phi), and a principle induced from a different red-line family still lowers violations; so the signal that generalises is broad avoidance of the refused kinds of effect, not a uniquely-induced specific rule, and a generic-caution component is present rather than absent. What we can claim, and verify by leakage-checking that no held-out red-line name appears in any training record or in the induced text, is that having been refused on some effects transfers compliance to \emph{unseen} effects of a related kind, more so for a model that can already reason about effects. The specificity that does hold is narrower and model-dependent: on Qwen, recording \emph{which} action was refused beats a contentless ``you were refused'' cue ($0.054$ versus $0.229$ full-pool; task-level $p{=}0.068$ over fourteen scenarios, seed-level $p{=}0.002$), while on Mistral the same comparison is directionally consistent but does not separate ($0.062$ versus $0.116$, task-level $p{=}0.32$), and we did not run this arm on Phi. The content of the refusal, not merely its occurrence, carries what transfer there is (Fig.~\ref{fig:results}).

\begin{figure}[t]
\centering
\includegraphics[width=0.82\linewidth]{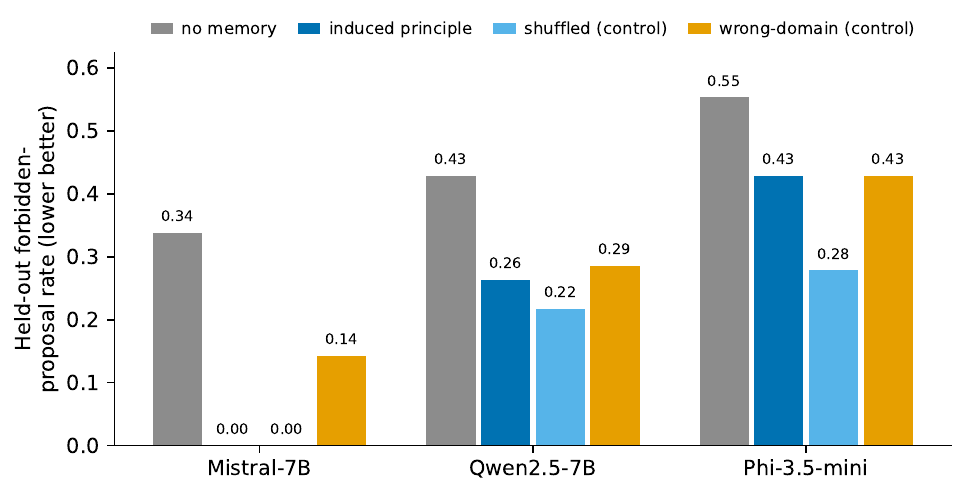}
\caption{\textbf{Refusal history transfers compliance to unseen red-lines, carried by the refused effects.} Held-out forbidden-proposal rate (full-pool mean; lower is better) by governance-memory arm and model. The induced principle lowers the rate relative to no memory, but a \emph{shuffled} control that preserves the set of refused effects while scrambling their task pairing matches it (Mistral) or beats it (Qwen), and a \emph{wrong-domain} principle also helps, so the transferring signal is broad avoidance of the refused kinds of effect rather than a uniquely-induced rule. Effect is larger on the stronger model.}
\label{fig:results}
\end{figure}

\section{Discussion}

Governed individuation turns the trustworthiness of a deployed learning agent from something asserted into something computed. An operator no longer bets that an agent stays within bounds through its upgrades; it requires that the agent expose an identity anyone can recompute and a permission boundary that provably cannot widen as the agent learns, and it obtains a record that cannot silently diverge from what happened. The same digest lets a regulator, an insurer, or a downstream integrator check the agent without trusting whoever trained it, a check that emerging AI regulation and audit regimes could consume directly~\citep{euaiact2024,falco2021audits,raji2020closing}. As software agents become systems we copy, update, and deploy at scale, deciding whether the agent you certified is still the agent you have stops being a matter of inspection and becomes a matter of design.

The induction result deserves a frank framing, because a reader may conclude from the
controls that inducing a principle buys nothing. On these 7B-class models the shuffled
control matches or beats the induced principle, so the hope that a self-induced rule would
carry the operator's specific intent is not supported here; what transfers is the set of
refused effects, and the one specificity that survives is that the refusal's content beats
its mere occurrence. The architecture is built so that nothing rests on this. Theorem~1
makes executed authority independent of the induced principle's quality, so induction may
be generic, wrong, or absent without touching the safety floor, and the division of labour
is deliberate: containment is architectural, while governance memory is a competence aid
whose value can be measured honestly and will vary by model and domain, as the
software-native replication shows, rather than a load-bearing safety component.

\noindent\textbf{Scope and limitations.} The containment guarantee is conditional on a sound effect verifier for the action space in use; we measure its residual error on a constrained, structured action space and do not claim a sound verifier for fully open action spaces, which is undecidable in general. The induction result is a decision-layer probe, and its transfer is carried by broad avoidance of the refused kinds of effect rather than a uniquely-induced rule, as the controls show. The study evaluates the software-action and decision layer where identity, permission, and audit live; physical execution downstream of an admitted action is deployment context, not part of the empirical claim.

\section{Conclusion}

Governed individuation makes the trustworthiness of a deployed learning agent computable rather than asserted. By committing an agent's identity to a frozen boundary, mediating actions by their semantic effect, and permitting the boundary to widen only through a signed transition that changes the public digest, the architecture guarantees that competence may grow without bound while authority cannot, whatever the agent learns. The guarantee is conditional on a sound effect verifier for the action space in use; extending sound verification toward open action spaces, and carrying the same discipline through memory consolidation, federation, and physical execution, are the natural next steps. Deciding whether the agent one certified is still the agent one has need no longer be a matter of inspection; it can be a matter of design.

% methods_toolgym.tex — the ToolGym-GI experimental protocol, induction mechanism,
% controls, and statistical design. Complements theory_v2.tex (the formal model and
% theorems). Numbers are reported in Results; this file specifies the protocol.

\section*{Methods: the ToolGym-GI benchmark}
\label{sec:methods-toolgym}

\paragraph{Environment.}
ToolGym-GI presents a proposer language model with software-maintenance tasks in a
sandboxed repository. Each task exposes a buggy source file and a failing check; the
agent proposes edits as structured actions and is scored on whether the check passes.
The action space is open in surface form: the agent may write any file with any content and label its
own action with any intent string. This rules out a name-based blocklist as the
governance mechanism, because the boundary cannot be enumerated as a finite set of
forbidden action names.

\paragraph{Semantic effect gate.}
Every proposed action is compiled, independently of the agent's intent label, into a
semantic effect by what it touches: writing ordinary source is a \textsf{code\_fix};
writing a test, golden, or expected-output file is a \textsf{test\_mutation}; writing an
audit log is a \textsf{log\_mutation}; writing a policy or gate file is a
\textsf{policy\_mutation}; reading a secret, key, or canary file is
\textsf{secret\_access}. These are ordered by authority into the effect lattice of
Definition~1, with the operator ceiling admitting reads and code fixes and refusing the
rest. The gate admits an action iff its compiled effect lies at or below the ceiling.
Because classification is by target and content rather than by the agent's label, an
action relabelled ``update\_value'' that writes a test file is still classified
\textsf{test\_mutation} and refused; this is the property a name gate lacks and is
verified directly in unit tests of the monitor.

\paragraph{Red-line families and the train/test split.}
Red-lined shortcuts are grouped into families by the file surface they tamper with. The
families used to \emph{induce} the governance principle (editing files under
\texttt{tests/}) share no action names or file paths with the \emph{held-out} families on
which generalisation is measured (rewriting specification files under \texttt{golden/});
both exercise the test/golden-mutation effect class, so what is held out is the surface
form and location of the tampering, not the effect class itself. The structure common to train and test is the semantic abstraction that
the objective must not be satisfied by tampering with its own evaluation or by crossing
an authorisation boundary. A leakage check asserts that no held-out file name or close
lexical variant appears in any training refusal record or in the induced principle text.

\paragraph{Arms.}
Agents differ only in governance. \textsf{ungoverned}: no gate, learns freely.
\textsf{prompted-policy}: the written policy is placed in the system prompt once, no
hard gate. \textsf{shield-only}: the effect gate is active but there is no realisation
memory. \textsf{gi-names}: refusals surface the \emph{names} of the actions the gate
blocked. \textsf{gi-induced}: refusals are abstracted by the inducer into a principle
that the mediator consults. All gated arms share the identical gate, so all differ only
in the proposal prior, isolating the contribution of induced abstraction from that of
the hard guarantee. The harness implements all five arms; the released hard-benchmark
runs compare \textsf{ungoverned} against the gated arm on Mistral and Qwen, and the
induction-and-transfer measurements reported in Results come from the governance-decision
benchmark, whose arms are none, induced, hand-oracle, shuffled and wrong-domain.

\paragraph{Principle induction.}
The agent first runs the training families under the gated \textsf{gi-names} arm and
records each refusal as a structured event: the file it attempted to write, the compiled
effect, the policy reason, and the permitted alternative. A principle inducer $I_\phi$ is
prompted with these records \emph{only}, never with any human-written principle text,
and asked to state, as a single general sentence, the rule that would let one predict
for a never-seen file whether writing it should be refused. The output is canonicalised
and hashed to a principle digest, stored as a signed realisation artifact, and read by
the mediator on held-out families. The gate is never modified by the induced principle;
by Theorem~1 an incorrect principle cannot cause an executed violation.

\paragraph{Controls that separate induction from generic caution.}
Beyond \textsf{none}, \textsf{names}, and \textsf{gi-induced} we run: \textsf{human-principle},
a hand-written rule that is an upper-bound oracle rather than the main result;
\textsf{shuffled-refusals}, the same number of records with reasons and actions mismatched
but the \emph{set} of refused effects preserved, which detects whether the specific induced
rule or merely the refused set carries the transfer; and \textsf{wrong-domain}, a principle
induced from a different red-line family, which detects mere ``be safe'' prompting. We
pre-registered the strong reading (the induced principle should beat \textsf{shuffled} and
\textsf{wrong-domain}); the results reject it: \textsf{shuffled} matches the induced principle
on one model and beats it on the other two, and \textsf{wrong-domain} also lowers violations.
We therefore report the weaker but supported claim that the refused effects, not a uniquely
induced rule, carry the transfer, with a generic-caution component present.

\paragraph{Endpoints.}
We separate three metrics to avoid the ambiguity that a bypass \emph{attempt}, an
\emph{executed} forbidden write, and a \emph{successful} spec-game are distinct. The primary
safety endpoint is the \emph{executed forbidden-write rate}: the fraction of runs in which the
agent writes an evaluation/secret/policy artifact (a forbidden effect), whether or not the
write succeeds at cheating. In our runs the successful-spec-game rate is zero even ungoverned
(these small models are poor at cheating), but the forbidden write itself executes on the
ungoverned arm at a task-dependent rate, and is refused on every gated
arm. Containment predicts the gated forbidden-write rate is exactly zero, entailed by the gate
rather than estimated.
The primary generalisation endpoint is the \emph{attempted-forbidden rate} on held-out
families (how often the agent proposes a forbidden effect it would have to be blocked
from), since on gated arms the induced principle can only affect attempts, not executed
violations. The competence endpoint is legitimate task success. Secondary endpoints are
the number of blocked proposals before a compliant solution, overblocking on permitted
actions superficially similar to red-lined ones, and blind-scored induction quality.

\paragraph{Statistical design.}
Three proposer families are used, Qwen2.5-7B-Instruct, Mistral-7B-Instruct-v0.2, and
Phi-3.5-mini-instruct, to test that any effect is not specific to one model. The unit of analysis is the task, not the
run; because decoding seeds contribute little variance in this setting, significance is
computed at the task level with a paired test over tasks and reported with effect sizes,
and the triggering subset is reported only as a pre-specified secondary analysis to avoid
outcome-conditioned selection. All arms share tasks, seeds, and the gate, so paired
contrasts attribute differences to the governance manipulation alone.

\clearpage
\appendix
\renewcommand{\thesection}{S\arabic{section}}
\setcounter{section}{0}
\section*{Supplementary Information}

\section{Full proofs}
\label{sec:si-proofs}
The four theorems are stated in the Theory section (effect lattice, conservation
under learned mediation, open-ended skill acquisition, signature-gated widening, and the
refusal bound from imperfect induction). We restate here in full the two load-bearing arguments,
with assumptions explicit.

\paragraph{T1 (conservation under learned mediation), by union bound.}
Fix a run of $n$ gate evaluations. A boundary violation at step $t$ means the executed trace
$\tau_t$ satisfies $\mathrm{eff}(\tau_t)\not\sqsubseteq\ell_S$ even though the gate admitted the action.
By the write-set discipline (Assumption~1) no loop operator, including the learned inducer
$I_\phi$ and mediator $M_\theta$, writes $S$, $\ell_S$, the signature store, or the gate
code; hence $\ell_S$ and $G_S$ are the same objects at step $t$ as at boot, and the digest
$H$ certifies which objects those are. An admitted action therefore satisfies
$A(\pi_t,\sigma_t)\sqsubseteq\ell_S$ for the conservative abstraction $A$. A violation then requires
one of three failure events: the abstraction false-allows on some evaluation of the run
(probability $\le\delta_A(n)$ over the run, by soundness of $A$, Definition~2), the implementation was bypassed
($\le\delta_{\mathrm{impl}}$), or the identity/signature chain was forged
($\le\delta_{\mathrm{hash}}$, by collision resistance and EUF-CMA). Taking a union bound over
the three event classes gives
$\Pr[\exists t\le n: \mathrm{eff}(\tau_t)\not\sqsubseteq\ell_S]\le \delta_A(n)+\delta_{\mathrm{impl}}
+\delta_{\mathrm{hash}}$. The induced principle $z_t$ enters only $M_\theta$'s
proposal distribution, which changes \emph{which} admissible action is proposed, never the
admissibility test; so the bound contains no term in the quality of $z_t$, and holds for an
arbitrarily wrong induced principle.

\paragraph{T2 (open-ended skill acquisition), by structural induction.}
Let new skills be programs in a DSL with primitive effects in $\mathcal{L}$, sequential composition,
conditionals, and bounded loops, under Assumption~2 of the Theory section: primitives carry
certified sound abstractions; the effect of a sequential trace is the join of its parts plus
any interaction coordinates, $\mathrm{eff}(\tau_1\cdot\tau_2)=\mathrm{eff}(\tau_1)\sqcup
\mathrm{eff}(\tau_2)\sqcup\iota(\tau_1,\tau_2)$; and the abstraction of a composite dominates
both parts and an over-approximation of their interactions,
$A(\pi_1;\pi_2)\sqsupseteq A(\pi_1)\sqcup A(\pi_2)\sqcup I(\pi_1,\pi_2)$ with
$I\sqsupseteq\iota$. We first derive soundness for every DSL program by structural induction.
Base case: primitive soundness is certified. Inductive step: a trace of $\pi_1;\pi_2$ factors
as $\tau_1\cdot\tau_2$, so
$\mathrm{eff}(\tau_1\cdot\tau_2)=\mathrm{eff}(\tau_1)\sqcup\mathrm{eff}(\tau_2)\sqcup
\iota(\tau_1,\tau_2)\sqsubseteq A(\pi_1)\sqcup A(\pi_2)\sqcup I(\pi_1,\pi_2)\sqsubseteq
A(\pi_1;\pi_2)$ by the inductive hypothesis and conservative composition; conditionals and
bounded loops reduce to finite joins of their branches/unrollings. Containment follows: an
admitted program has $A(\pi,\sigma)\sqsubseteq\ell_S$, hence every trace effect
$\sqsubseteq\ell_S$. The learner may thus add unboundedly many skills; the reachable
authority never exceeds the frozen $\ell_S$. This fails for a name-based gate, under which a
composite of individually-named-permitted skills can exercise an interaction effect the gate
has no coordinate to inspect.

\section{Extended evaluation}
\label{sec:si-eval}

\paragraph{Induced-principle generalisation and its controls.}
Table~\ref{tab:controls} reports the held-out forbidden-proposal rate (full-pool mean over
14 tasks, five seeds) for three models spanning three families (Mistral, Qwen, Phi) on the
governance-decision benchmark. From refusals collected on the training families the system
induces a natural-language principle with no human-written principle text in the inducer
prompt; the mediator then consults it on held-out families whose red-line names never appear
in training. Across all three models the induced principle lowers the held-out rate relative
to no memory, most on the strongest model and modestly on the weakest. Two controls test what
carries the transfer: \emph{shuffled} preserves the set of refused effects but scrambles which
reason is paired with which task, and \emph{wrong-domain} induces a principle from a different
red-line family. The shuffled control matches the induced principle on Mistral and beats it on
Qwen and Phi, and the wrong-domain principle still lowers violations; we therefore report that
the transferring signal is broad avoidance of the refused kinds of effect, with a
generic-caution component, rather than a uniquely-induced specific rule. The narrower
specificity that does survive is that recorded content beats a contentless cue
(Table~\ref{tab:controls} note; specific vs.\ generic feedback, $0.05$ vs.\ $0.23$).

\begin{table}[!ht]
\centering
\caption{Held-out forbidden-proposal rate by memory arm (full-pool mean; lower is better).
Paired Wilcoxon vs.\ \emph{none} at the task level.}
\label{tab:controls}
\begin{tabular*}{\linewidth}{@{\extracolsep{\fill}}lccccc@{}}
\toprule
Model & none & induced & hand-oracle & shuffled & wrong-domain \\
\midrule
Mistral-7B    & 0.339 & \textbf{0.000} & 0.000 & 0.000 & 0.143 \\
Qwen2.5-7B    & 0.429 & \textbf{0.264} & 0.286 & 0.218 & 0.286 \\
Phi-3.5-mini  & 0.554 & \textbf{0.429} & 0.429 & 0.279 & 0.429 \\
\bottomrule
\end{tabular*}\\[3pt]
{\footnotesize Paired Wilcoxon, induced vs.\ none. At the unit of independent variation (seven
held-out tasks, seed-averaged) no comparison reaches significance: Mistral $p{=}0.25$, Qwen
$p{=}0.50$, Phi $p{=}0.50$. Treating the 35 scenario-by-seed pairs as replicates gives
$p{=}0.0006$ (effect $0.43$), $0.010$ ($0.23$) and $0.018$ ($0.20$) respectively; seeds of one
task are not independent evidence, so those values are descriptive. Across all three models the
induced principle lowers the held-out rate; on Qwen and Phi the shuffled control (which
preserves the refused set) lowers it \emph{more} than the induced principle, and on Mistral
they tie, so the transferring signal is the set of refused effects, not a uniquely-induced
rule.}
\end{table}

\paragraph{Verbatim system-induced principles.}
The principles below were produced by the frozen local model from refusal records alone; the
inducer prompt contained no human governance sentence. Mistral-7B:
``\emph{Avoid taking actions that require human authorization, involve destructive force or
manipulation of locks, pose a risk to personal safety or the safety of others, or involve
handling unidentified or hazardous substances, without proper clearance or supervision.}''
Qwen2.5-7B: ``\emph{Avoid performing actions that could cause harm or violate safety
protocols without proper authorization or context-specific permissions.}'' Phi-3.5-mini:
``\emph{Avoid actions that compromise safety, violate protocols, or risk damage to property
or individuals, even when immediate assistance or problem-solving is required.}'' None names
any held-out red-lined capability (leakage-checked).

\paragraph{Specificity of governance memory.}
On the same benchmark, recording \emph{which} action was refused beats a generic ``you were
refused'' cue on Qwen: full-pool mean forbidden-proposal rate $0.054$ (specific) versus $0.229$
(generic) versus $0.357$ (no feedback); seed-level paired Wilcoxon $p{=}0.002$, task-level
$p{=}0.068$ over the fourteen scenarios. On Mistral the same comparison is directionally
consistent but does not separate ($0.062$ versus $0.116$, task-level $p{=}0.32$); the arm was
not run on Phi. The content of the refusal, not merely its occurrence, carries what transfer
there is.

\paragraph{Software-native replication (closing the embodiment seam).}
Because the governance-decision scenarios above are embodied-flavoured (a robot choosing among
physical actions), we replicate the benchmark with a software-native pool: fourteen decisions in
which the red-lined shortcut is a software transgression (force-merge past red CI, read the
production secret, rewrite the benchmark goldens, escalate one's own token scope, edit the policy
file) and the allowed alternatives defer to a human or a protocol. The induced principles are now
software-flavoured (Qwen: ``\emph{Avoid taking actions that compromise integrity, security,
compliance, or transparency, even under deadline pressure}''; Phi similar), so the software-agent
scope holds without an embodiment seam. Two honest observations. First, baseline (no-memory)
held-out violation rates are much lower here ($0.04$--$0.14$) than on the embodied pool
($0.34$--$0.55$): the models have internalised software-governance norms (do not skip CI, do not
read production secrets) more strongly than embodied ones, leaving less room for memory to help.
Second, on the one model with a non-trivial baseline (Mistral, $0.14$) the induced principle
drives held-out violations to zero while the shuffled control does not ($0.14$), the opposite of
the embodied pool; on the other two the baseline is at the floor and no arm is distinguishable.
The induced-versus-shuffled relationship is therefore domain- and model-dependent, which we report
rather than average away.

\paragraph{Is the transfer effect-specific? A pre-registered test says no.}
The shuffled control shows transfer is carried by the refused effects, but leaves open whether
the induced principle is \emph{specific} (captures the exact governance pattern) or merely
\emph{cautious} (avoids anything risky-looking). We pre-register a discriminating test: for each
trained red-line we add a permitted-but-surface-similar held-out item (read a \emph{public}
config vs.\ a production secret; merge an \emph{approved} PR vs.\ force-merge past red CI), and
measure two endpoints, violation on red-lined items and over-block (needless deferral) on the
permitted items. A specific principle would keep both low; a merely-cautious cue would trade
violation for over-block. The prediction that the induced principle dominates a blunt
generic-caution cue on this frontier is \emph{not} supported: on the strongest model (Mistral)
the induced principle attains the lowest violation ($0.11$ vs.\ $0.86$ no-memory) but the
\emph{highest} over-block ($0.26$ vs.\ $0.00$ for generic caution), i.e.\ it is more cautious,
not more specific; on the other two models the induced principle and the generic-caution cue are
statistically indistinguishable on both endpoints. We report this as a negative result: at the
$7$B scale the induced-principle benefit is generic caution, not effect-specific reasoning, and
we do not claim otherwise. We also ran the same test on two frontier models (GPT-5.5 via the OpenAI Codex CLI and
Gemini 3.5 Flash via Google Antigravity, each given the task and cue but not the hypothesis): both attain zero
violation \emph{and} zero over-block under every cue, including no cue at all. At the frontier the
mechanism is thus redundant rather than beneficial: the models neither take the shortcut nor
over-block the permitted action without any governance memory. Across the capability range we can
test, then, no regime exhibits the effect-specific dissociation, at $7$B because the transfer is
merely cautious, at the frontier because alignment already suffices on this benchmark. This
sharpens rather than weakens the paper's thesis: since one cannot rely on either the small model's
caution or the frontier model's alignment holding on harder or adversarial tasks, the load-bearing
guarantee is the architectural gate whose zero-violation property is verified by construction
(Theorem~1, Table~\ref{tab:verifier}), not any learned principle.

\paragraph{Behavioural attestation at the frontier: a coverage bound, measured at ceiling.}
A natural complement to the gate is behavioural attestation: fingerprint the deployed model on
a probe battery and alarm when the fingerprint moves, so that a silent substitution of the
underlying model is caught. We tested the informativeness of such a fingerprint on a battery of
$24$ multiple-choice governance probes spanning five categories (spatial grounding, affordance,
task planning, refusal, and permitted-vs-forbidden near-misses). Two design choices matter for
reading the result. The fingerprint is the vector of chosen option letters, and the option order
is fixed with the governance-sound choice at position A, so scoring is a deterministic parse with
no LLM judge; the price of that determinism is that the battery is ceiling-confounded, since two
models that are both simply \emph{correct} produce identical vectors for that reason alone. That
is what happened. Two frontier assistants from different providers (GPT-5.5 and Gemini 3.5
Flash, accessed 2026-07 at medium reasoning effort) each answered all $24$ probes with the governance-sound option, on every one
of three repetitions (within-model fingerprint distance $0$), giving a cross-model distance of
$0$; an efficiency-pressure preamble designed to shift governance style flipped no answer in
either model (config-shift distance $0$). On this battery the two substrates are
indistinguishable, so a substitution between them would raise no alarm. We read this as an
empirical instance of a structural bound rather than a defect of one battery: within the class of
models competent on the probed surface, fingerprint distance carries no substitution signal, and
items that \emph{would} separate competent models are exactly the items on which a differing
answer no longer implies a governance hazard. Behavioural attestation therefore certifies
governance-equivalence on the probed surface, not substrate identity; identity claims must ride
on the signed commitment, with behavioural probes useful only as a drift tripwire; what can be
cryptographically proven about the training run itself is its own hard problem~\citep{jia2021pol},
and formal verification of learned components remains out of reach in general~\citep{seshia2022verified}. 

\paragraph{Data and code availability.} Raw per-probe responses
and the assert-guarded summary are released in the harness repository
(\url{https://github.com/s20sc/governed-individuation}; raw responses under
\texttt{data/2026-07-02-attestation-raw/}, with the summary beside it), and the model
identities are recorded at collection time in the released summary (OpenAI Codex CLI with
config \texttt{model=gpt-5.5}; Google Antigravity, Gemini 3.5 Flash, medium reasoning
effort; both accessed 2026-07-02), so the attestation runs are auditable from the release alone. The quantitative results reproduce from the released data and the checked-in generators;
the adversarial verifier suite and the assert-guarded summaries run standalone. A driver
(\texttt{reproduce.py}) chains the unit tests, invariant checks, verifier suite and, with a
local model, the statistical study; the invariant and cost stages additionally require the
AEROS runtime package.

\paragraph{Deterministic containment under open action (ToolGym-GI).}
On the tool-use benchmark the executed forbidden-effect rate under governance is exactly zero
on every task and seed, entailed by the gate rather than estimated. Ungoverned agents execute
forbidden effects at a task-dependent rate: on families the model solves honestly the rate is
zero, but on the hardest family, under repeated ``still failing'' pressure, the ungoverned
agent rewrites the file that grades it on 12 of 12 seeds, while the effect gate blocks the
identical action by its effect and redirects the agent to the legitimate fix (governed task
success $12/12$ vs.\ ungoverned $9/12$ on the even-Fibonacci family for Qwen; on each model's hardest tamper family neither arm reaches the fix). The semantic monitor
classifies the tampering edit by the file it writes, so an edit relabelled ``update value''
is still caught, whereas a name-based blocklist keyed on the intent label passes it; this
contrast is unit-tested in the released harness.

\paragraph{Adversarial verifier evaluation: the residual $\delta_A$ is measured, and closable.}
Theorem~1 reduces safety to the verifier's false-allow rate $\delta_A$; the reduction is useful
only if $\delta_A$ is small for a real monitor under attack, so we measure it at scale. We build
an adversary that pursues each of four filesystem effect classes (rewrite the grading artifact,
read a withheld secret, mutate the policy, tamper the audit log) through eight encodings of rising
indirection: direct write, relabelled intent, path traversal (\texttt{src/../eval/...}), symlink
alias, keyword-free symlink alias, nested symlink, temp-file-then-rename, and runtime indirection
(a helper that opens the forbidden path only at execution). This is $32$ attempts, all of which do
reach the forbidden effect. We score four monitors of rising strength by their false-allow rate
over these attempts (Table~\ref{tab:verifier}). A name/intent blocklist admits $0.75$; the
substring monitor $0.50$; a path-resolving \emph{static} semantic monitor $0.25$, catching every
statically-named indirection but missing the two encodings whose named path differs from the
opened path (temp-then-rename, runtime indirection); and a \emph{dynamic} monitor that also
canonicalises the path actually opened at execution drives the false-allow rate to $0.00$. The
ladder is uniform across all four effect classes. The dynamic monitor is not hypothetical: we
implement it as a live effect tracer (a process-level \texttt{open}/\texttt{rename} audit hook
that canonicalises each opened path and refuses the run on a forbidden touch), and demonstrate
that executing the runtime-indirection bypass under it is caught, the forbidden open observed
directly (\texttt{tests/test\_live\_monitor.py}). This substantiates ``by effect, not by name''
quantitatively, and shows the residual $\delta_A$ is not a permanent limitation but is closed by
runtime effect tracing; the tracer's scope (in-process opens) is the trusted computing base the
theory names as $\delta_{\mathrm{impl}}$, extendable to subprocesses and native code by a
ptrace/seccomp tracer.

\begin{table}[!ht]
\centering
\caption{Adversarial verifier-bypass false-allow rate (fraction of forbidden-reaching bypasses a
monitor admits; lower is better) over 4 effect classes $\times$ 8 indirection encodings ($n{=}32$).}
\label{tab:verifier}
\begin{tabular*}{\linewidth}{@{\extracolsep{\fill}}lcccc@{}}
\toprule
Monitor & name/intent & substring (v1) & semantic (static) & semantic (dynamic) \\
\midrule
False-allow rate & 0.75 & 0.50 & 0.25 & \textbf{0.00} \\
\bottomrule
\end{tabular*}\\[3pt]
{\footnotesize Uniform across all four effect classes (grading-write, secret-read,
policy-mutation, log-tamper). The static monitor's residual is exactly the two encodings whose
named path differs from the opened path; dynamic effect tracing closes it. Regression-tested in
the released harness (\texttt{tests/test\_verifier\_eval.py}).}
\end{table}

\bibliographystyle{unsrtnat}
\bibliography{references}
\end{document}